\title{\our{}: Representing 3D Objects as Surfaces}
\author{%
  Przemys\l{}aw Spurek$^*$\\
 Jagiellonian University, \\
 Kraków, Poland\\
   \texttt{przemyslaw.spurek@uj.edu.pl} \\
  \And
  Maciej Zięba\thanks{Equal contribution} \\
  Wrocław University of \\
  Science and Technology, \\
  Wrocław, Poland
  \AND
  Jacek Tabor \\
 Jagiellonian University, \\
 Kraków, Poland\\
  \And
  Tomasz Trzciński  \\
  Warsaw University of \\
  Technology, \\
  Warsaw, Poland
}
\def\P{\mathcal{P}}
\def\Q{\mathcal{Q}}
\def\X{\mathcal{X}}
\def\Z{\mathcal{Z}}
\def\E{\mathcal{E}}
\def\C{\mathcal{C}}
\def\D{\mathcal{D}}
\def\L{\mathcal{L}}
\def\Tr{\mathrm{Tr}}
\def\R{\mathbb{R}}
\def\EE{\mathbb{E}}
\def\our{HyperFlow}
\def\ourdens{Spherical Log-Normal}
\newtheorem{theorem}{Theorem}[section]
\theoremstyle{remark}
\begin{document}

\maketitle

\begin{abstract}

In this work, we present \our{} - a novel generative model that leverages hypernetworks to create continuous 3D object representations in a form of lightweight surfaces (meshes), directly out of point clouds. Efficient object representations are essential for many computer vision applications, including robotic manipulation and autonomous driving. However, creating those representations is often cumbersome, because it requires processing unordered sets of point clouds. Therefore, it is either computationally expensive, due to additional optimization constraints such as permutation invariance, or leads to quantization losses introduced by binning point clouds into discrete voxels. Inspired by mesh-based representations of objects used in computer graphics, we postulate a fundamentally different approach and represent 3D objects \emph{as a family of surfaces}. To that end, we devise a generative model that uses a hypernetwork to return the weights of a Continuous Normalizing Flows (CNF) target network. The goal of this target network is to map points from a probability distribution into a 3D mesh. To avoid numerical instability of the CNF on compact support distributions, we propose a new \ourdens{} function which models density of 3D points around object surfaces mimicking noise introduced by 3D capturing devices. 
As a result, we obtain continuous mesh-based object representations that yield better qualitative results than competing approaches, while reducing training time by over an order of magnitude.

\end{abstract}


\section{Introduction}


Representing 3D objects efficiently is a prerequisite for a multitude of contemporary computer vision and machine learning applications, including robotic manipulation~\cite{kehoe2015survey} and autonomous driving~\cite{yang2018pixor}. 3D registration devices used currently to create those representations, such as LIDARs and depth cameras, sample object surfaces and output a set of 3D points called a \emph{point cloud}. 

Processing point clouds poses several challenges. 
First of all, the size of the point cloud can vary between objects and processing variable-size inputs is cumbersome for contemporary neural networks used in practical applications. Although one can subsample or upsample point clouds, it requires additional processing steps, continuous signed distance functions~\cite{park2019deepsdf} or even separate models~\cite{yifan2019patch,yu2018pu}. Other solutions to that problem rely on discretizing 3D space into regular 3D voxel grids~\cite{wu20153d,wu2016learning}, collections of images~\cite{su2015multi} or occupancy grids~\cite{ji20123d,maturana2015voxnet}. These approaches, however, increase the memory footprint of object representations and lead to quantization losses. Secondly, processing point clouds with neural networks is challenging due to the lack of ordering within sets of 3D points. More precisely, permuting the points in the cloud can lead to inconsistent outputs. DeepSets~\cite{zaheer2017deep} and PointNet~\cite{qi2017pointnet,qi2017pointnet++} address this problem by including permutation invariant layers in neural network architectures. 
Nonetheless, the same modifications cannot be used when the task requires a model to produce outputs of various sizes, {\it e.g.} in the case of point cloud reconstruction tasks.

More recent methods that create representations of 3D objects from variable-size unordered point clouds rely on generative neural networks that treat point clouds as a sample from a 3D probability distribution~\cite{yang2019pointflow,stypulkowski2019conditional,spurek2020hypernetwork}. PointFlow~\cite{yang2019pointflow} returns probability distributions of the 3D object point cloud, instead of an exact set of points. Its main limitation, however, is a computationally expensive training process caused by conditioning the Continuous Normalizing Flow (CNF) module~\cite{grathwohl2018ffjord} of the network with the autoencoder latent space. As a consequence, PointFlow models require a significant number of parameters which results in a high memory footprint of the model and long training procedure. 
To reduce this burden and simplify the model, HyperCloud~\cite{spurek2020hypernetwork} uses a hypernetwork, instead of a CNF module as in PointFlow, to return weights of a fully-connected \emph{target network} that maps a uniform distribution on a 3D ball to a 3D point cloud. Although the simplicity of this approach leads to increased efficiency of HyperCloud, the quantitative results obtained by the model are inferior to those of PointFlow, mostly because conventional fully-connected neural networks are not capable of modeling complex 3D point cloud structures. Even though using more sophisticated CNF as a target network could address this shortcoming, the formulation of HyperCloud does not allow sampling from non-compact support prior, required by the Continuous Normalizing Flow (CNF) to work.




\begin{figure}
\begin{center}
\includegraphics[height=5.4cm]{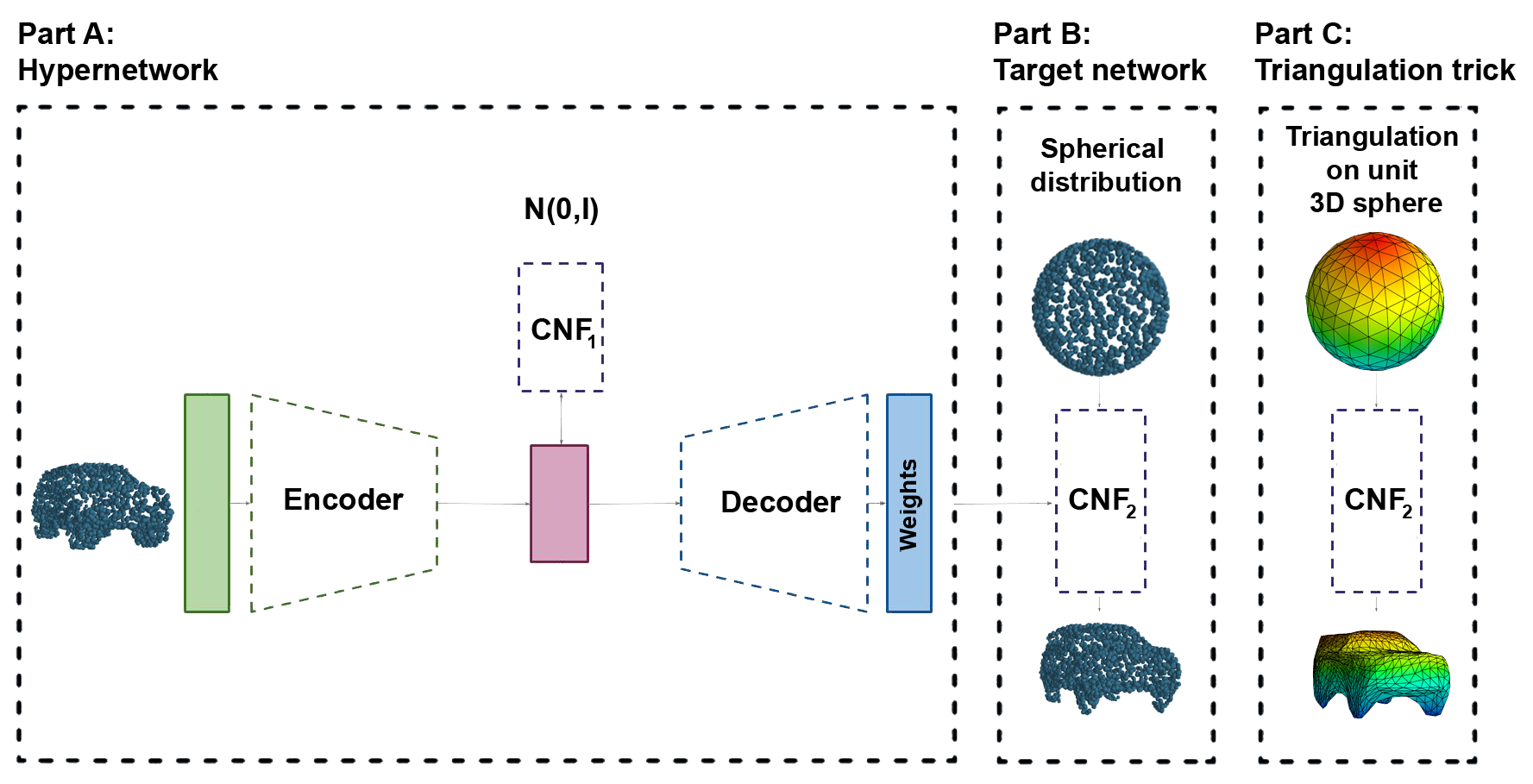}
\end{center} 
\caption{\our{} method leverages a hypernetwork architecture to take a 3D point cloud as an input and return parameters of the Continuous Normalizing Flow (CNF) target network~{\bf(Part~A)}. To represent 3D objects as \emph{families of surfaces}, we use CNF to parametrize density of point clouds around surfaces with non-compact support distribution called \ourdens{}~{\bf(Part~B)}. Using this parametrization in the hypernetwork configuration, we can obtain high-quality point cloud reconstructions as well as 3D object meshes{~\bf(Part~C)}, at a fraction of the training cost required by the vanilla CNF model and with a significantly lower memory footprint.
}
\vspace{-0.5cm}
\label{fig:idea}
\end{figure}

In this paper, we take a fundamentally different approach to representing 3D objects and, inspired by mesh triangulation methods used in computer graphics~\cite{edelsbrunner_2000}, we model objects as \emph{families of surfaces}. More specifically, we consider a point cloud as a sample from a distribution on object surfaces with additive noise introduced by a registration device, such as LIDAR. To model this distribution, we propose a new \ourdens{} function which mimics the topology of 3D objects and provides non-compact support. This, in turn, enables effective utilization of a CNF model as a part of a hypernetwork, instead of a fully-connected neural network as done in HyperCloud~\cite{spurek2020hypernetwork}. 

The resulting generative model we introduce in this work, dubbed \our{}\footnote{The code is available \url{https://github.com/maciejzieba/HyperFlow}.}, produces state-of-the-art generative results both for point clouds and mesh representations. 
Because we rely on a hypernetwork instead of conditioning a CNF with the autoencoder latent space, our model uses far fewer parameters of the CNF function. 
As a result, we reduce the training time and corresponding memory footprint of the model by over an order of magnitude with respect to the competing PointFlow.

Our contributions can be summarized as follows:
\begin{itemize}
    \item We introduce a new \our{} generative network that models 3D objects as \emph{families of surfaces} and allows to build state-of-the-art point cloud representations that can be transformed into 3D meshes by leveraging generative properties of a target network. 
    \item  We propose a new \ourdens{} distribution which models a point cloud density with non-compact support and, hence, can be effectively used by a CNF model.
    \item To the best of our knowledge, our work is the first approach to train a CNF as a target network which reduces its training time and memory footprint by over an order of magnitude, while preserving state-of-the-art generative capabilities.
\end{itemize}

\section{\ourdens{} distribution and the triangulation trick }\label{sec:density}

In this section, we introduce a \ourdens{}  distribution that models density of point clouds around surfaces of 3D object and show how it can be used to generate meshes via the so-called \emph{triangulation trick}. 

\begin{wrapfigure}{r}{0.5\textwidth}
\begin{center}
 \includegraphics[height=3.2cm]{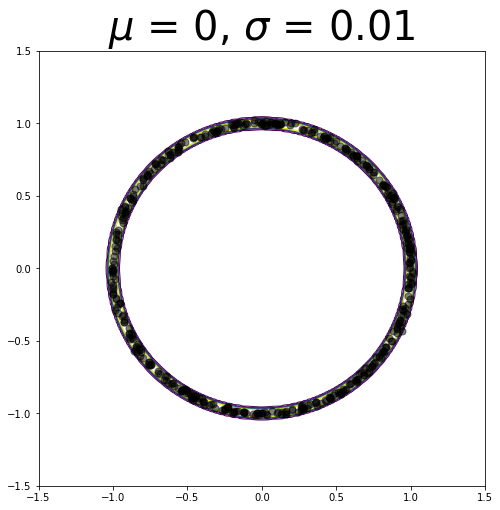}
 \includegraphics[height=3.2cm]{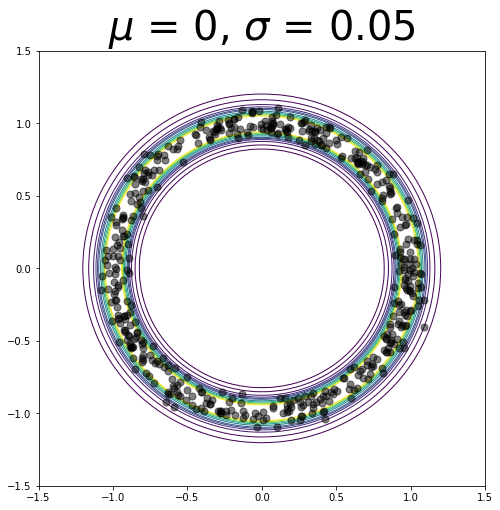}
\end{center} 
\caption{Level sets and samples from \ourdens{} distribution with different parameters~$\sigma$ and $\mu=0$. Since \ourdens{} distribution does not have a compact support, it can be used in flow-based architectures.}
\label{fig:1} 
\end{wrapfigure}

Since our approach relies on flow-based models, a density distribution has to fulfil several conditions to be used in practice. 
First of all, flow-based methods cannot be trained on probability distributions with compact support. 
For instance, it is not possible to train a flow-based model on a 3D ball, as proposed in HyperCloud~\cite{spurek2020hypernetwork}, since computing the log-likelihood cost function used in flows would return infinity for this distribution. As a result, the model does not converge due to numerical instability. Secondly, we would like to model the probability distribution of the surface (mesh representation), which is two-dimensional (the border of a 3D object). Therefore, a Gaussian distribution in $\R^3$ is not a good choice, since it models only elements in 3D.
Finally, the density distribution should be topologically coherent with the density of the modeled object. More precisely, because of the way registration devices sample space around object surfaces, point clouds are populated with the highest density around object edges and missing points within object structure. Modeling this density with a distribution that does not allow discontinuities is infeasible as per Theorem~\ref{th:a}~\cite{theorem1}.

\begin{theorem}\label{th:a}
There is no continuous invertible map between the 3-ball and the 2-sphere that respects the boundary.
\end{theorem}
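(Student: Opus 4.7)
The plan is to interpret ``respects the boundary'' as the condition that the restriction $f|_{\partial B^3}$ maps the boundary 2-sphere $\partial B^3 \cong S^2$ onto the target 2-sphere $S^2$, and then derive a contradiction from injectivity using the existence of interior points. The cleanest setup identifies the key topological fact that $B^3$ is the closed $3$-ball, so $\partial B^3$ is a $2$-sphere and the interior $B^3 \setminus \partial B^3$ is non-empty (homeomorphic to $\mathbb{R}^3$).

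First I would assume for contradiction that a continuous bijection $f : B^3 \to S^2$ exists with the boundary-respecting property, namely that $f|_{\partial B^3} : S^2 \to S^2$ is a bijection. This immediately gives $f(\partial B^3) = S^2$, so the boundary alone exhausts the codomain.

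Next I would pick any interior point $y \in B^3 \setminus \partial B^3$, which exists since the interior is non-empty. Then $f(y) \in S^2$, and by surjectivity of $f|_{\partial B^3}$ there is some boundary point $x \in \partial B^3$ with $f(x) = f(y)$. Since $x$ lies on the boundary and $y$ in the interior, we have $x \neq y$, contradicting injectivity of $f$. This contradiction closes the argument.

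An alternative route, which I would mention for completeness but not use as the main proof, is to invoke the fact that a continuous bijection from a compact space to a Hausdorff space is automatically a homeomorphism, and then observe that $B^3$ and $S^2$ have different topological (covering) dimensions ($3$ versus $2$), so no homeomorphism between them can exist. The boundary-based proof is preferable in this context because it uses exactly the boundary hypothesis motivating the paper's construction, and avoids invoking invariance of domain or dimension theory. The only real subtlety, and thus the main obstacle, is pinning down the precise meaning of ``respects the boundary''; once that is specified as surjectivity of $f|_{\partial B^3}$ onto $S^2$, the proof reduces to the one-line pigeonhole step above.
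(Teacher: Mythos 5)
The paper itself does not prove this statement at all: Theorem~\ref{th:a} is quoted from the cited reference \cite{theorem1}, where the content is the genuinely topological obstruction that a solid $3$-ball cannot be carried onto a $2$-sphere by a continuous invertible map. Measured against that intent, your primary argument is fragile in exactly the place you flag: everything hinges on reading ``respects the boundary'' as mere surjectivity of $f|_{\partial B^3}$ onto the target sphere, and under that reading the proof is a pure pigeonhole that never uses continuity (or any topology of $B^3$ and $S^2$). That is a warning sign that the interpretation has trivialized the theorem rather than proved it; the intended meaning, both in this paper's motivation (a flow prior supported on a solid ball versus a target concentrated on a surface) and in the cited source, is that the map carries the boundary sphere to the surface in a structure-preserving way, and the substance is that no continuous bijection between $B^3$ and $S^2$ can exist at all.

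Your alternative route is the argument that actually establishes this and is essentially the standard one: a continuous bijection from a compact space to a Hausdorff space is a homeomorphism, and $B^3$ and $S^2$ are not homeomorphic. I would promote it to the main proof, since it needs no boundary hypothesis and is immune to the ambiguity in the phrase ``respects the boundary.'' Two caveats: the non-homeomorphism step is itself nontrivial and should be justified by a named result --- invariance of dimension/domain, or the fact that $B^3$ is contractible while $S^2$ has nontrivial second homology (or homotopy) group --- rather than treated as free; and if one instead reads the hypothesis as ``restricts to the identity on $\partial B^3$'' without assuming bijectivity, the relevant fact becomes the no-retraction theorem for the ball, which again requires algebraic-topological input. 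With either of these made explicit, your argument is complete and matches the content the paper imports from \cite{theorem1}.
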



For modeling the surface of an object with a continuous, invertible map, one shall consider the topology of the object \cite{rezende2015variational,grathwohl2018ffjord,behrmann2018invertible}.
To learn a transformation that is continuous, invertible and provides results close to object boundary, one has to choose a prior that is topologically similar to the expected point cloud, {\it i.e.} has the same number of discontinuities\footnote{Continuous normalizing flows (FFJORD \cite{grathwohl2018ffjord}) are able to approximate discontinuous density functions. This, however, remains insufficient to model high-quality 3D point clouds while generating continuous parametrization of object surfaces. 
Consequently, in our approach, we propose a density distribution without compact support and with a single discontinuity, which corresponds to topology of 3D objects represented with point clouds.}. Therefore, we construct a probability distribution on a sphere without compact support.

\paragraph{\ourdens{} distribution on $\R^n$.} 

A probability distribution on a sphere in $\R^n$ can by constructed by using one-dimensional density distribution, which takes only positive real values
$f:\R_+ \to \R_+.$
In such case, we can define spherical density distribution as:
\begin{equation}
f_{n}: \R^n \ni x \to \frac{1}{\mathrm{vol}(S_{n-1}) \|x\|^{n-1}}f(\|x\|),
\end{equation}

where $\mathrm{vol}(S_{n-1})$ is a surface area of a $n$-dimensional unitary sphere and $f$ is a one-dimensional density, which takes only positive real values.
We use one-dimensional density distribution $f:\R_+ \to \R_+$ along radius of unit sphere in all directions.
In our model, we use a Log-normal distribution 
$
f(r)=\frac{1}{r}\cdot \frac {1}{\sigma \sqrt{2\pi}}
\exp \left(-{\frac {(\log r-\mu )^{2}}{2\sigma ^{2}}}\right)
$
that is a continuous probability distribution of a random variable, whose logarithm is normally distributed and, hence, provides a non-compact support.

\paragraph{\ourdens{} distribution in $\R^3$.}
To develop an intuition behind the proposed distribution, we start with a simple visualization in $\R^2$. 
Fig.~\ref{fig:1} shows level sets and sample from \ourdens{} distribution with different parameters $\sigma$. 
\ourdens{} distribution does not have a compact support and can therefore be used in a flow-based architecture. Furthermore, we can force the distribution to concentrate as close as possible to a 2D sphere boundaries. 

In $\R^3$, our \ourdens{} distribution is defined as:
\begin{equation}
f_{3}(x)=\frac{1}{2(2\pi)^{3/2}\sigma\|x\|^3}
\exp \left(-\frac{(\log\|x\|-\mu)^2}{2\sigma^2}\right).
\end{equation}

In order to use our distribution in a flow-based model, we need to compute its log-likelihood function:
\begin{equation}
\log f_{3}(x)=-\log(2(2\pi)^{3/2})-\log \sigma -3\log \|x\| - \frac{1}{2\sigma^2} (\log\|x\|-\mu)^2.
\end{equation}

Finally, sampling elements from our \ourdens{} distribution can be done by following a simple procedure. First sample $r$ from one-dimensional Gaussian $N(0,1)$ then sample $x$ from $n$-dimensional Gaussian $N(0,I)$. Sample form \ourdens{} we obtain by the following equation:
    $
    \exp(\mu+\sigma \cdot r) \cdot \frac{x}{\|x\|}.
    $


We avoid numerical instabilities of training by applying a straightforward strategy to find the right values of $\sigma$ parameter: we start with an arbitrary large value of $\sigma$ and reduce it linearly during training.

\begin{figure}[t!]
\begin{center}
\includegraphics[height=7.5cm]{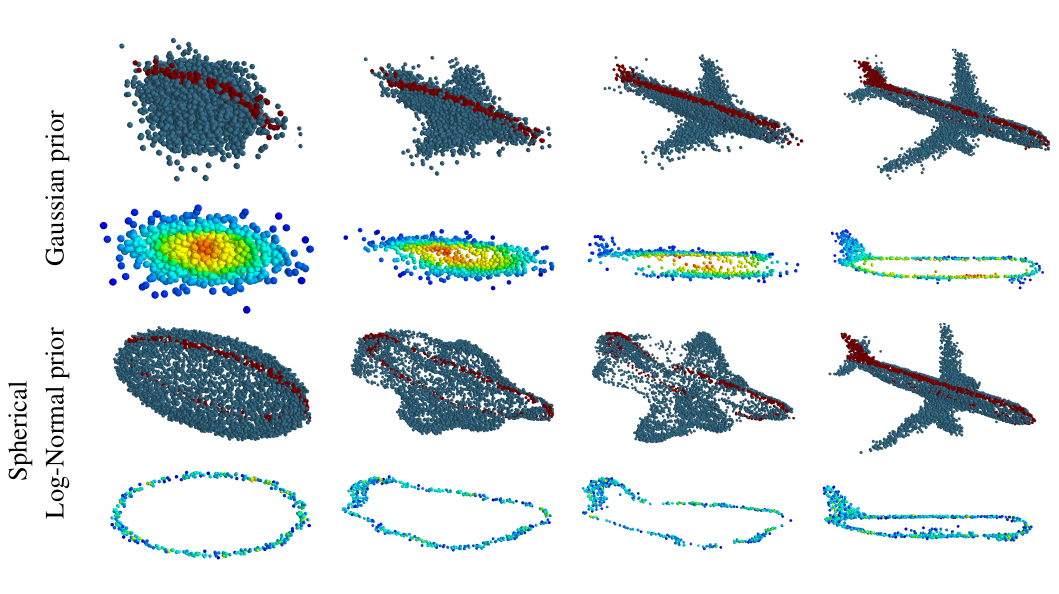}
\end{center} 
\caption{We compare how the prior density is modified for the model with Gaussian prior {\bf(upper two rows)} and \ourdens{} {\bf(bottom two rows)}. In the first and third row we show how the flow model transforms the original density into the target dataset. The second and fourth row show the cross-sections along the plane depicted by red points. For the Gaussian, target space points are not distributed evenly across the object (a central part of Gaussian distribution is transformed into the bottom of the plane, while its tails are used to model wing tips). For the \ourdens{}, target space points are distributed evenly, across the object, showcasing that our approach truly models the distribution of the points along object surfaces.} 
\label{fig:slice}
\end{figure}

\paragraph{Triangulation trick}

To model 3D object surfaces as meshes using \our{} generative model, we need to investigate the relationship between point clouds and object surfaces. In principle, a point cloud representing a 3D object can be considered a set of samples located on the surface of the object with additive noise introduced by a registration device. We use \ourdens{} to model this distribution with peak density around object surfaces (in 2D, around circle edges, in 3D close to the surface of the sphere) and limited by the radius of the distribution. %
Once we obtain a parametrized distribution of a point cloud which models object surface together with a registration noise, we can produce a mesh with a simple operation which we call the  \emph{triangulation trick}.  

The triangulation trick involves transferring vertices of a sphere mesh through a target network the same way as 3D points, as shown in Part C of Fig~\ref{fig:idea}. Since the target network transforms a sample from a \ourdens{} distribution into a 3D point cloud, when we feed it with a sphere triangulation, it outputs a mesh. In fact, when we substitute samples from \ourdens{} distribution with sphere vertices, we effectively assume minimal registration noise. Processing vertices by the target network pre-trained on point clouds allows us to directly generate denoised mesh representation of object surfaces and obtain a high-quality 3D object rendering. The generative character of our \our{} model enables construction of the entire mesh by processing only vertices with a target network, without the need for information about the connections between them, as done in traditional rendering methods. 

Fig.~\ref{fig:slice} presents reconstructions obtained using Gaussian and \ourdens{} distributions. We look at the cross-sections of the reconstructions to observe the main differences on how the input distribution is transformed into a final model by a target network. 
For the Gaussian distribution, its tails are transformed into object details, such as wing tips and airplane rear aileron. Therefore, we cannot claim that the peak density models surfaces of the object, while its tails model the registration noise. 
For \ourdens{}, its distribution tails are spread along object surfaces, modeling registration noise. This allows us to produce the final mesh through the triangulation trick, effectively denoising 3D mesh-based object representation and yielding high-quality results, as shown in Fig.~\ref{fig:ec_mesh}. 

\begin{figure}[t!]
\begin{center} 

 \includegraphics[height=2.8cm]{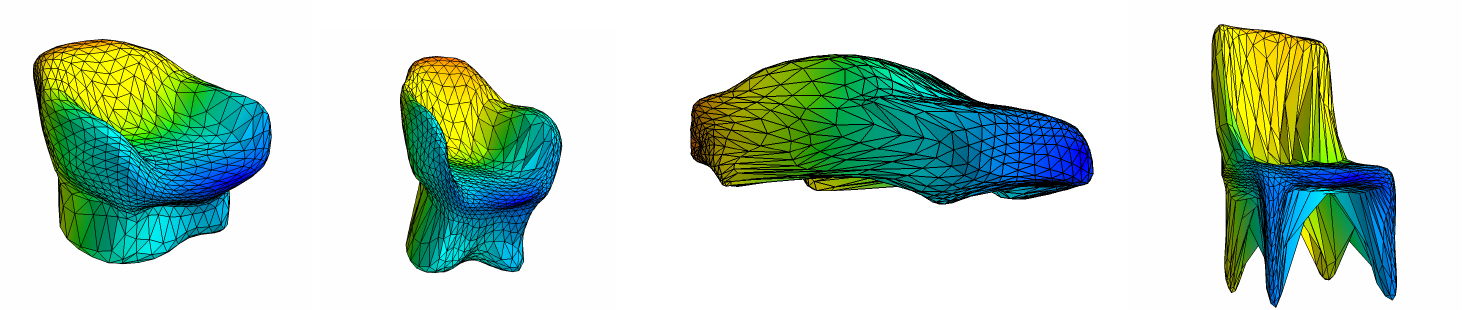}

\end{center} \vspace{-0.3cm}
\caption{Mesh representations generated by our \our{} method. Contrary to the existing methods that return point cloud representations sparsely distributed in 3D space, our approach allows to create a continuous 3D object representation in the form of high-quality meshes.}
\vspace{-0.3cm}
\label{fig:ec_mesh} 
\end{figure}  

\section{\our{}: hypernetwork and Continuous Normalizing Flows for generating 3D point clouds}
\label{sec:method}

In this section, we present our \our{} model that leverages a hypernetwork framework to train a Continuous Normalizing Flow~\cite{grathwohl2018ffjord} target network and generate 3D point clouds together with its mesh-based representation. Since \our{} encompasses previously introduced  autoencoder-based PointFlow~\cite{yang2019pointflow} with conditioned continuous normalizing flow modules, and HyperCloud method~\cite{ha2016hypernetworks}, that also leverages hypernetworks, we briefly describe these two approaches before presenting ours.

\paragraph{Autoencoder-based generative model for 3D Point Clouds}

Let us first present the autoencoder architecture. The basic aim of autoencoder is to transport the data through a typically, but not necessarily, lower dimensional latent space $\Z = \R^D$ while minimizing the reconstruction error.  Thus, we search for an encoder $\E:\X \to \Z$ and decoder $\D:\Z \to \X$ functions, which minimizes the reconstruction error. 
In the Autoencoder-based generative model we additionally ensure that the data transported to the latent comes from the prior distribution (typically Gaussian one)~\cite{kingma2013auto,tolstikhin2017wasserstein,tabor2018cramer}.
\paragraph{Continuous normalizing flow}

Generative models are one of the fastest growing areas of deep learning. Variational Autoencoders (VAE)~\cite{kingma2013auto} and Generative Adversarial Networks (GAN)~\cite{goodfellow2014generative} are the most popular approaches. Another model gained popularity -- Normalizing Flow (NF) \cite{rezende2015variational}. A flow-based generative model is constructed by a sequence of invertible transformations. Unlike the other two methods mentioned previously, the model explicitly learns the data distribution  and therefore the loss function is simply the negative log-likelihood. 

Normalizing Flow (NF) \cite{rezende2015variational} is able to model complex probability distributions. A normalizing flow transforms a simple prior distribution (usually Gaussian one) $\P(Y)$ into a complex one (represented by data distribution $X$) by applying a sequence of invertible transformation functions: $f_1, \ldots , f_n : Y \to X$. Flowing through a chain of transformations
$x = F(y) = f_n \circ f_{n-1} \circ \ldots  \circ f_{1}(y), $
we obtain a probability distribution of the final target variable.

Then the probability density of the
output variable is given by the change of variables formula:
\begin{equation}
\log P(x) = \log P(y) - \sum_{k=1}^n \log \left| \det \frac{\partial f_k}{\partial y_{k-1}} \right|,
\end{equation}
where $y$ can be computed from $x$ using the inverse flow:
$y = f^{-1}_1 \circ f^{-1}_{2} \circ \ldots  \circ f^{-1}_{n}(x).$
In such framework, both the inverse map and the determinant of the Jacobian should be computable.

The continuous normalizing flow \cite{chen2018neural} is a modification of the above approach, where instead of a discrete sequence of iterations we allow the transformation to be defined by a solution to a differential equation
$ 
\frac{ \partial y(t)}{ \partial t} = f(y(t), t),
$
where $f$ is a neural network that
has an unrestricted architecture.  Continuous Normalizing Flows (CNF ) $F_{\theta}: Y \ni y \to x \in X$ 
is a solution of differential equations with the initial value problem $y(t_0) = x$, $\frac{\partial y(t)}{\partial t} =f_{\theta}(y(t), t)$. In such a case we have
\begin{equation}
F( y ) = F_{\theta}( y(t_0) ) =  y(t_0) + \int^{t_1}_{t_0} f_{\theta}(y(t), t) dt, \mbox{ and }
F_{\theta}^{-1}(x) = x + \int_{t_1}^{t_0} f_{\theta}(y(t), t)dt,
\end{equation}

where $f_{\theta}$ defines the continuous-time dynamics of the flow
$F_{\theta}$ and $y(t_1) = x$.

The log probability cost function with prior distribution with density $g$ can be computed by:
\begin{equation}\label{eq:flow}
\C_F(X; g, \theta) = \sum_{x \in X}   \log g( F_{\theta}^{-1}(x) ) - \int^{t_1}_{t_0} \Tr \left( \frac{\partial f_{\theta}}{\partial y(t)} \right) dt.
\end{equation}

In PointFlow~\cite{yang2019pointflow} authors show that CNF can be used for modeling 3D objects. Instead of directly parametrizing the distribution of points in a shape (fixed size 3D point cloud), PointFlow models this distribution as an invertible parameterized transformation of 3D points from a prior distribution (e.g., a 3D Gaussian). Intuitively, under this model, generating points for a given shape involves sampling points from a generic Gaussian prior, and then moving them according to this parameterized transformation to their new location in the target shape.

\paragraph{Hypernetwork}

Hypernetworks, introduced in \cite{ha2016hypernetworks}, are defined as neural models that generate weights for a separate target network solving a specific task.
Making an analogy between hypernetworks and generative models, the authors of \cite{sheikh2017stochastic}, use this mechanism to generate a diverse set of target networks approximating the same function. Hypernetworks can also be used for functional representations of images \cite{klocek2019hypernetwork}. 

In the case of generating 3D point clouds, objects are represented by a neural network. Autoencoder based architecture "produces" the neural network which transforms prior distribution into elements from a point cloud. In HyperCloud \cite{spurek2020hypernetwork} autoencoder based architecture takes as an input point cloud and directly produces weights to another neural network, which models elements from a 3D object.  



\paragraph{\our{}}

In this section, we present details of our novel model dubbed \our{}\footnote{We make our implementation available at \url{https://github.com/maciejzieba/HyperFlow}} which encompasses and extends prior works by training continuous normalizing flow modules to model 3D point cloud distributions with a hypernetwork framework. Our model is inspired by a Variational Autoencoder (VAE)~\cite{kingma2013auto,rezende2014stochastic} framework that allows learning $\P(X)$ from
a dataset of observations of $X$. VAE models data distribution via a latent variable $z$ with a prior distribution $\P_{\psi}(z)$, and a decoder $\P_{\theta}(X|z)$ which reconstructs the distribution of $X$ condition on a given $z$. The model is trained together with an encoder $\Q_{\phi}(z|X)$ by minimizing the lower bound on the log-likelihood of the observations (ELBO).

Instead of using a Gaussian prior over shape representations as done in~\cite{yang2019pointflow}, we add another CNF to model a learnable prior $\P_{\psi}$. The corresponding ELBO cost function can be rewritten after~\cite{yang2019pointflow} as: 

\begin{equation}\label{eq:elbo}
\L(X; \phi, \psi, \theta) = \underbrace{\EE_{\Q_{\phi}(z|X)}  \left[ \log \P_{\theta}(X|z) \right]}_{\text{Reconstruction}}   + 
\underbrace{\EE_{\Q_{\phi}(z|X)} \left[ \log \P_{\psi}(z) \right] + H\left[ \Q_{\phi}(z|X) \right]}_{\text{Latent representation}},
\end{equation}
where $H$ is the entropy and $\P_{\psi}(z)$ is the prior distribution
with trainable parameters $\psi$.

We propose to adapt the above cost function to a hypernetwork framework. We therefore introduce our \our{} model that consists of two main parts, as shown in Fig.~\ref{fig:idea}. The first one is a hypernetwork that outputs weights (Fig.~\ref{fig:idea} Part A) of another neural network. The second one is a target network (Fig.~\ref{fig:idea} Part B) which models the distribution of elements on the surface of a 3D object.  
Using autoencoder terminology, we define three elements: an encoder, a decoder and a prior distribution.
The encoder $\E_{\phi}:\X \to \Z$  can reduce data dimensionality by mapping it to a lower-dimensional latent space $\Z \subseteq \R^D$.
We follow~\cite{achlioptas2017learning} and use a simple permutation-invariant encoder to predict $\E_{\phi}$.

We use $\P_{\Z}$ over shape representations proposed by PointFlow~\cite{yang2019pointflow}. 
The assumed probability distribution on the latent pace
can be more complex than  the commonly used $N(0,I)$ and not given in an explicit form. In such a framework, we use an additional continuous normalizing flow $G_{\psi}$, which transfers latent space into a Gaussian prior. 
Finally, we propose to use a decoder that returns weights of the target network~$\D_{\theta}: \Z \ni z \to \Theta$, instead of 3D points as done in~\cite{yang2019pointflow,stypulkowski2019conditional}. The resulting hypernetwork contains an encoder $\E_{\phi}$, a decoder $\D_{\theta}$  and a flow $ G_{\psi} $ (Fig.~\ref{fig:idea} Part A).

The hypernetwork takes as an input a point-cloud $X \subset \R^3$ and returns weights $\Theta$ to  $f_{\Theta}$ that defines the continuous-time dynamics of the flow $F_{\Theta}$.
CNF takes an element from the prior distribution $\P$ and transfers it to an element on the surface of the object, see Part B: target network in Fig.~\ref{fig:idea}. 
In~our~work, we use a  Free-form Jacobian of
Reversible Dynamics (FFJORD) \cite{grathwohl2018ffjord}  and transformation between \ourdens{} distribution and the 3D object. 
As presented in~Sec.~\ref{sec:density} this choice of distribution function allows one to create a continuous mesh representation with the triangulation trick.


The cost function of \our{} consists of two parts. The first one correspond to hypernetwork. This part of the architecture is similar to PointFlow. The second one is a cost function of CNF corresponding to target network.
The final cost function of our~\our{} model can be calculated using  Eq.~(\ref{eq:elbo}):
$$
\C_{H}(X; \theta, \phi, \psi) = \underbrace{\C_F(X; f_{3}, \Theta_{\theta, \phi, \psi})}_{ \text{ Target network cost function } } + \underbrace{\C_G(\E_{\phi}(X); N(0,1), \psi) + H(\E_{\phi}(X)) }_{ \text{ Hypernetwork cost function } },
$$
where $H$ is the entropy function, $\C_F$ is a CNF cost function between point cloud $X$ and \ourdens{}  density $f_{3}$ and $\C_G$ is a CNF cost function between latent representation $\E_{\phi}(X)$ and a Gaussian prior.

\begin{table}
\scalebox{.95}{
\begin{tabular}{@{\hspace{0.0em}}c@{\hspace{0.1em}}|@{\hspace{0.1em}}l@{\hspace{0.4em}}l@{\hspace{0.4em}}l@{\hspace{0.2em}}l@{\hspace{0.4em}}l@{\hspace{0.1em}}|@{\hspace{0.1em}}l@{\hspace{0.4em}}l@{\hspace{0.4em}}l@{\hspace{0.4em}}l@{\hspace{0.4em}}l@{\hspace{0.1em}}|l@{\hspace{0.1em}}l@{\hspace{0.4em}}l@{\hspace{0.2em}}l@{\hspace{0.4em}}l@{}}
\hline
\hline
& \multicolumn{5}{c|}{\textit{Airplane}} & \multicolumn{5}{c|}{\textit{Chair}} & \multicolumn{5}{c}{\textit{Car}} 
\\ \hline
\multirow{2}{*}{Method} 
& \multirow{2}{*}{JSD} & \multicolumn{2}{c}{MMD} & \multicolumn{2}{c|}{COV}
& \multirow{2}{*}{JSD} & \multicolumn{2}{c}{MMD} & \multicolumn{2}{c|}{COV}
& \multirow{2}{*}{JSD} & \multicolumn{2}{c}{MMD} & \multicolumn{2}{c}{COV}\\  
&  & CD & EMD & CD & EMD 
&  & CD & EMD & CD & EMD
&  & CD & EMD & CD & EMD\\ 
\hline 

 l-GAN    & \bf 3.61 & 0.269 & 3.29 & \bf 47.90 &  50.62 & 2.27 & 2.61  &  7.85 & 40.79 & 41.69 & 2.21 & 1.48  & 5.43 & 39.20 & 39.77 \\
 PC-GAN  & 4.63 & 0.287 & 3.57 & 36.46 & 40.94 & 3.90 & 2.75  & 8.20 & 36.50 & 38.98 & 5.85 & 1.12  & 5.83 & 23.56 & 30.29\\
 PointFlow     & 4.92 & \bf 0.217 & 3.24 & 46.91 & 48.40 &  1.74 &  2.42  & 7.87 & \bf 46.83 & \bf 46.98 & \bf 0.87 & \bf 0.91  & \bf 5.22 &  44.03 &  46.59\\
 HyperCloud & 4.84 & 0.266 & 3.28 & 39.75 & 43.70 & 2.73 & 2.56  & \bf 7.84 & 41.54 & 46.67 & 3.09 & 1.07  & 5.38 & 40.05 & 40.05   \\ 
 \our{} &  5.39    &   0.226     &  \bf 3.16   &    46.66   &    \bf 51.60 & \bf 1.50   &  \bf 2.30    &   8.01  & 44.71 &  46.37 &  1.07   &    1.14  &  5.30   & \bf 45.74 &  \bf 47.44 \\
\hline   
\end{tabular}
}
\caption{Generation results. MMD-CD scores are multiplied by
$10^3$; MMD-EMD scores and JSDs are multiplied by $10^2$. }
\label{tab:gen_results}
\end{table}

\section{Experiments}
\label{sec:experiments}

In this section, we present the evaluation of our model against the competing methods on two tasks: 3D point clouds generation and 3D mesh generation. Furthermore, we test the efficiency of our approach in terms of training time and memory footprint. All experiments are done on a stationary unit with a Nvidia GeForce GTX 1080 GPU. If not stated otherwise, default parameters are used. 

\paragraph{Generating 3D point clouds}

We compare the generative capabilities with competing approaches: latent-GAN~\cite{achlioptas2017learning}, PC-GAN~\cite{li2018point}, PointFlow~\cite{yang2019pointflow}, HyperCloud~\cite{spurek2020hypernetwork}. We follow the evaluation protocol of~\cite{yang2019pointflow} and train each model using point clouds from one of the three categories in the ShapeNet dataset~\cite{shapenet}: \emph{airplane},
\emph{chair}, and \emph{car}. 
Tab.~\ref{tab:gen_results} presents the results and shows that \our{} obtains comparable or superior generative results to the state-of-the-art PointFlow method.

\paragraph{Generating 3D meshes}

The main advantage of our method, when compare to the reference solutions, is the ability to generate high-quality 3D point clouds as well as meshes using the triangulation trick presented in Sec.~\ref{sec:density}. For evaluation of the quality of mesh grid representation, we follow the evaluation protocol of~\cite{spurek2020hypernetwork}. For PointFlow, we use the triangulation trick and create object meshes by feeding the target network a 3D sphere. For HyperCloud and our \our{} method we use a sphere with radius $R=1$. As can be seen in Tab.~\ref{tab:sphere}, PointFlow that uses a Gaussian distribution as a prior provides results inferior to HyperCloud and \our{}, while our \our{} method offers the best performance, thanks to using~\ourdens{} as a prior instead of a compact support distribution function as in HyperCloud. More qualitative mesh results as well as detailed description of metrics used in our experiments can be found in the supplementary material.   
\paragraph{Training time and memory footprint comparison}

\begin{table}
\scalebox{0.92}{
\begin{tabular}{@{\hspace{0.0em}}c@{\hspace{0.1em}}|@{\hspace{0.1em}}l@{\hspace{0.4em}}l@{\hspace{0.4em}}l@{\hspace{0.2em}}l@{\hspace{0.4em}}l@{\hspace{0.1em}}|@{\hspace{0.1em}}l@{\hspace{0.4em}}l@{\hspace{0.4em}}l@{\hspace{0.4em}}l@{\hspace{0.4em}}l@{\hspace{0.1em}}|l@{\hspace{0.1em}}l@{\hspace{0.4em}}l@{\hspace{0.2em}}l@{\hspace{0.4em}}l@{}}
\hline
& \multicolumn{5}{c|}{\textit{Airplane}} & \multicolumn{5}{c|}{\textit{Chair}} & \multicolumn{5}{c}{\textit{Car}} 
\\ \hline
\multirow{2}{*}{Sphere R} 
& \multirow{2}{*}{JSD} & \multicolumn{2}{c}{MMD} & \multicolumn{2}{c|}{COV}
& \multirow{2}{*}{JSD} & \multicolumn{2}{c}{MMD} & \multicolumn{2}{c|}{COV}
& \multirow{2}{*}{JSD} & \multicolumn{2}{c}{MMD} & \multicolumn{2}{c}{COV}\\  
&  & CD & EMD & CD & EMD 
&  & CD & EMD & CD & EMD
&  & CD & EMD & CD & EMD\\ 
\hline 

PointFlow  & & & & &  & & & & &  & & & & &     \\ \hline
R=2.795    & 22.26 & 0.49    & 6.65    & 44.69 & 20.74 &  19.28 & 4.28 & 13.38 & 36.85 & 20.84  & 16.59 & 1.6     & 8.00    & 20.17    & 17.04    \\
R=3.136    & 26.46 & 0.60    & 6.89    & 39.50    & 19.01 & 22.52 & 4.89 & 14.47 & 32.47 & 17.22  & 20.21 & 1.75    & 7.80    & 21.59    & 17.3 \\
R=3.368    & 29.65 & 0.68    & 6.84    & 40.49    & 16.79 &
24.68 & 5.36    & 14.97   & 31.41    & 17.06 & 24.10 & 1.96    & 8.35    & 18.75    & 17.04\\ \hline
HyperCloud    & & & & &  & & & & &  & & & & & \\ \hline
R=1  & 9.51  & 0.45 & 5.29 & 30.60    & 28.88  & 4.32  & \bf 2.81 & 9.32 & 40.33 & 40.63   & \bf 5.20  & \bf 1.11 & 6.54 & 37.21 & 28.40  \\ \hline

\our{}    & & & & &  & & & & &  & & & & & \\ \hline
R=1  & \bf 6.55  & \bf 0.38 &  \bf 3.65 & \bf 40.49 & \bf 48.64 &  \bf 4.26 &  3.33 & \bf 8.27 & \bf 41.99 & \bf 45.32 & 5.77 & 1.39 & \bf 5.91 & 28.40 & \bf 37.21 \\ 

\end{tabular}
}
\caption{The values of quality measures of 3D representations obtained by sampling from sphere of a given radius $R$ for \emph{airplane}, \emph{chair} and \emph{car} shapes. \our{} generates higher quality of point cloud representation than those of PointFlow and HyperCloud. }
\label{tab:sphere}
\end{table}

Fig.~\ref{fig:proc_comparison} displays a comparison between our \our{} method and the competing PointFlow. For a fair comparison we evaluated the architectures used in the previous sections that obtain best quantitative results. The models were trained on the \emph{car} dataset.  Our~\our{} approach leads to a significant reduction in both training time and memory footprint due to a more compact flow architecture enabled by a hypernetwork framework. 


\begin{figure}[t!]
\centering
  \begin{subfigure}{6.8cm}
    \centering \includegraphics[width=\linewidth]{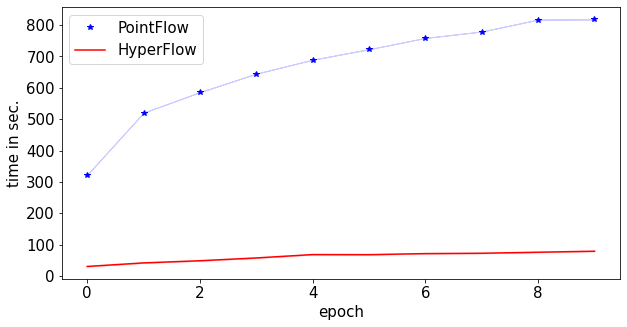}
   \end{subfigure}
     \begin{subfigure}{6.8cm}
         \centering
 \includegraphics[width=\linewidth]{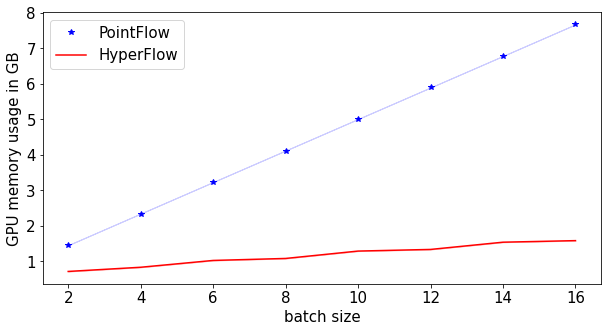}
    \end{subfigure}
\caption{Comparison of training times and GPU memory used by PointFlow and \our{}. Our \our{} method offers over an order of magnitude decrease in both training time and memory.}


\label{fig:proc_comparison}
\end{figure}

\section{Conclusions}

In this work, we introduce a novel \our{} method that uses a hypernetwork to model 3D objects as \emph{families of surfaces} and, hence, allows to  build state-of-the-art point cloud reconstructions and mesh-based object representations. To model a distribution of a point cloud we propose a new \ourdens{} distribution with non-compact support that can be effectively used by a CNF model. Finally, we believe our work is the first approach to train CNF as a target network which reduces training cost and opens new research paths for modeling complex 3D structures, such as indoor scenes.
    
\section*{Broader Impact}

This research can be beneficial for researchers and engineers working in the space of 3D point clouds and related registration devices, such as LIDARs and depth cameras. As such, the proposed methods can be used in the context of autonomous driving and robotics. Further extensions of this work can be beneficial for people with disparities, especially related to sensory disorders, such as shortsightedness or blindness, as 3D capturing devices can effectively extend their way of interacting and perceiving the external world. On the other hand, robotic automation resulting from this work can potentially put at disadvantage people whose livelihoods depend on manual execution of jobs that can be substituted with robotics. In case of system failure, the consequences include problems with handling outputs of registration devices, such as LIDARs and depth cameras. Our method does not leverage any biases in the data.

\section{Supplementary material}

In this supplementary material, we first present the full description of evaluation metrics used in the experiments. We then describe two experiments showing the relationship between Gaussian distribution and \ourdens{} distribution proposed in our work. Finally, we show an extended set of visualizations obtained by \our{}.

\subsection{Description of evaluation metrics}






Following the methodology for evaluating generative fidelity and diversification among samples proposed in \cite{achlioptas2017learning} and \cite{yang2019pointflow}, we use the following evaluation metrics: Jensen-Shannon Divergence, Coverage, Minimum Matching Distance 1-nearest Neighbor Accuracy. 

Jensen-Shannon Divergence (JSD): a measure of the distance between two empirical distributions $P$ and $Q$, defined as:
$$
\begin{array}{c}
JSD(P\|Q) \! = \! \frac{KL(P\|M) + KL(Q\|M)}{2}, \mbox{ where } M \! = \! \frac{P+Q}{2}.
\end{array}
$$

Coverage (COV): a measure of generative capabilities in terms of richness of generated samples from the model. For two point cloud sets $X_1, X_2 \subset \R$ coverage is defined as a fraction of points in $X_2$ that are in the given metric the nearest neighbor to some points in $X_1$.

Minimum Matching Distance (MMD): since COV only takes the closest point clouds into account and does not depend on the distance between the matchings additional metric was introduced. For point cloud sets $X_1$, $X_2$ MMD is a measure of similarity between point clouds in $X_1$ to those in $X_2$.

We examine the generative capabilities of our \our{} model with respect to the existing reference approaches. We strictly follow the methodology presented in \cite{yang2019pointflow}.  We train each model using point clouds from one of the three categories in the ShapeNet dataset: \emph{airplane},
\emph{chair}, and \emph{car}.

\subsection{ Scheduling parameters of \ourdens{} }

In our model we use \ourdens{} density with $m=0$ and $\sigma=0.001$. Using \ourdens{} density with small $\sigma=0.001$ might be unstable since density distributing has small tails, see Fig.~2 (in main paper). At the beginning of training a log-likelihood cost function in some points might be close to zeros (numerically unstable).

Therefore, in the training procedure we start with large $\sigma= 1$ and reduce such parameter to $\sigma= 0.001$. We use linear scheduling. In the case of starting $\sigma_0$ and final value of $\sigma_1$ with $n$ epochs we reduce the parameter by $\Delta \sigma = \frac{\sigma_1-\sigma_0}{n}$ in each epoch.

Our model is approximately 10 times faster than PointFlow (see experimental section in main paper), and can be easily trained on \our{} density. In PointFlow architecture is larger and it is diffitult to train such model on our distribution from scratch.
This process can be accelerated by using pre-trained model on classical Gaussian distribution. 
In such a case we can start from \ourdens{} distribution with parameters $\mu$ and $\sigma$ which approximate Gaussian distribution (see Theorem~\ref{th_nor}). In Fig.~\ref{fig:com_den} we present comparison between samples from Gaussian distribution and \ourdens{} distribution with such parameters.   
Thanks to such solution we can take a model already trained on Gaussian distribution and train it further with our strategy.

\begin{figure}
\begin{center}
 \includegraphics[height=6cm]{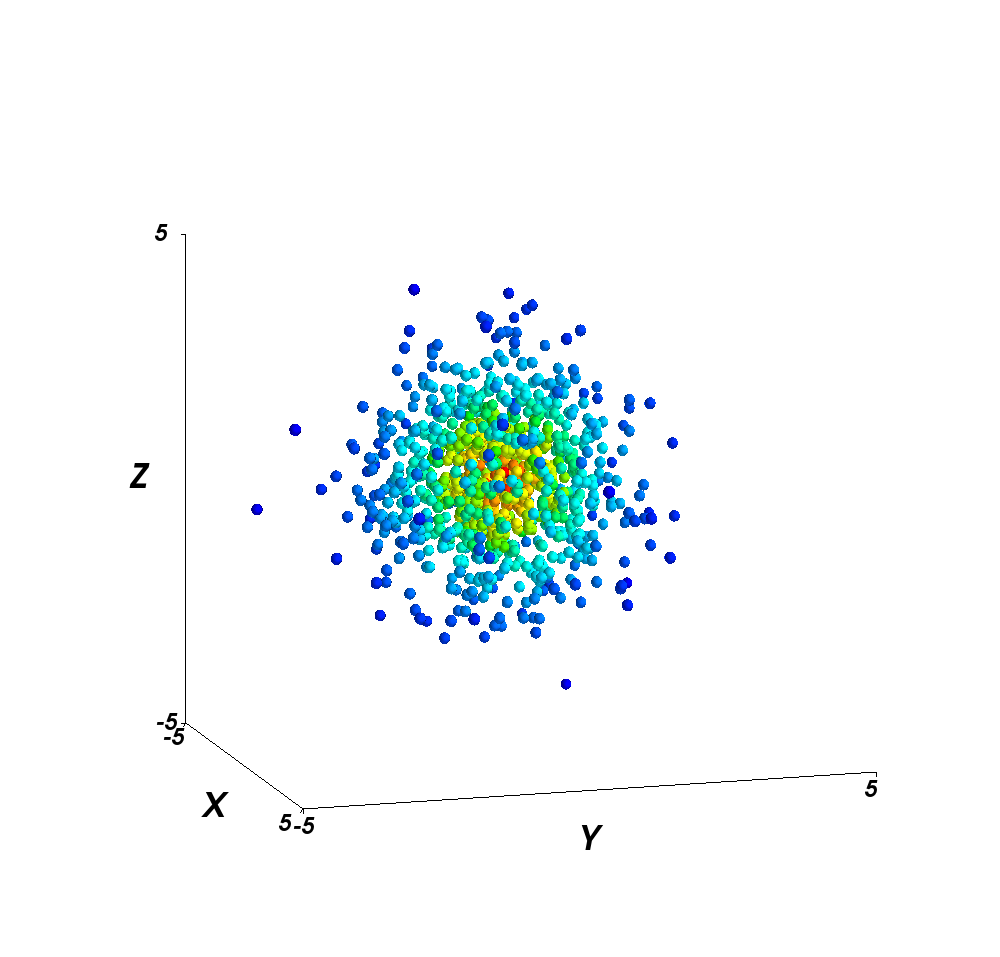}
 \includegraphics[height=6cm]{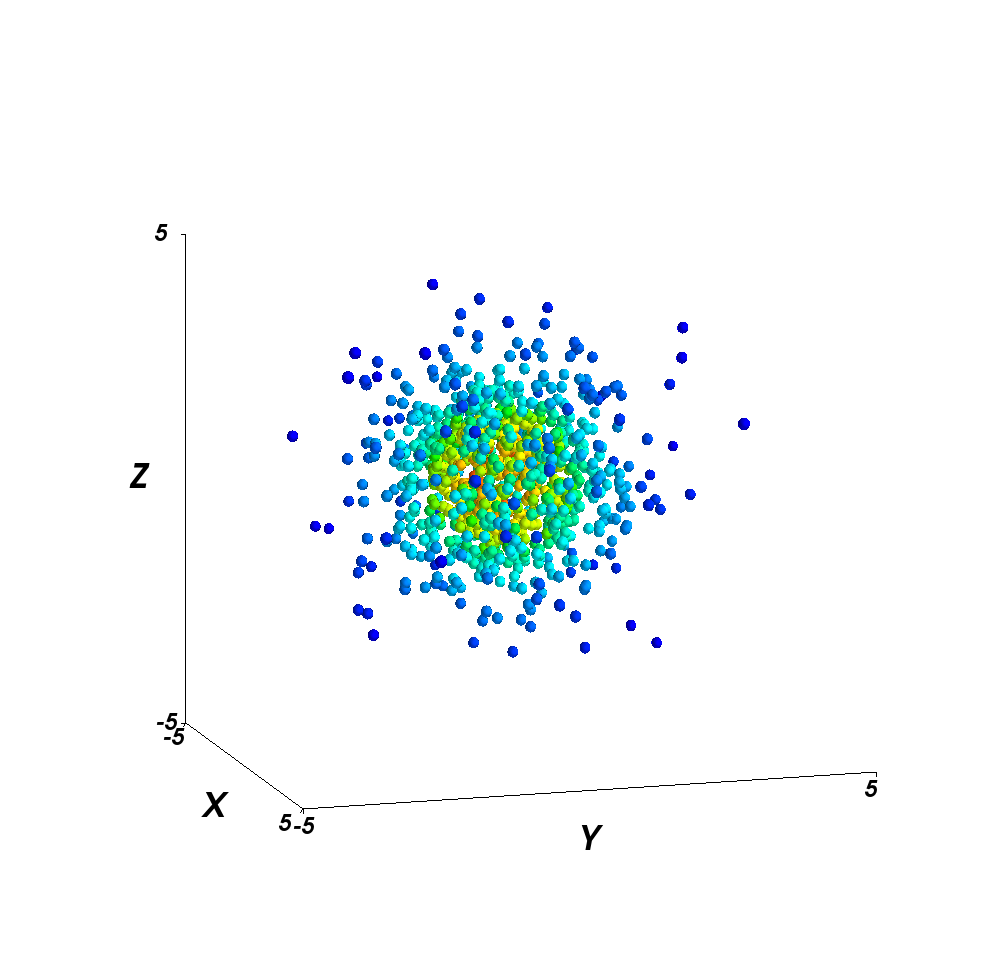}
\end{center} 
\caption{Comparison between samples from a Gaussian distribution {\bf(left)} and \ourdens{} distribution {\bf(right)} which approximates normal distribution (with parameters from Theorem~\ref{th_nor}).}
\label{fig:com_den} 
\end{figure}
\begin{figure}[t!]
\begin{center}
  \includegraphics[height=6.0cm]{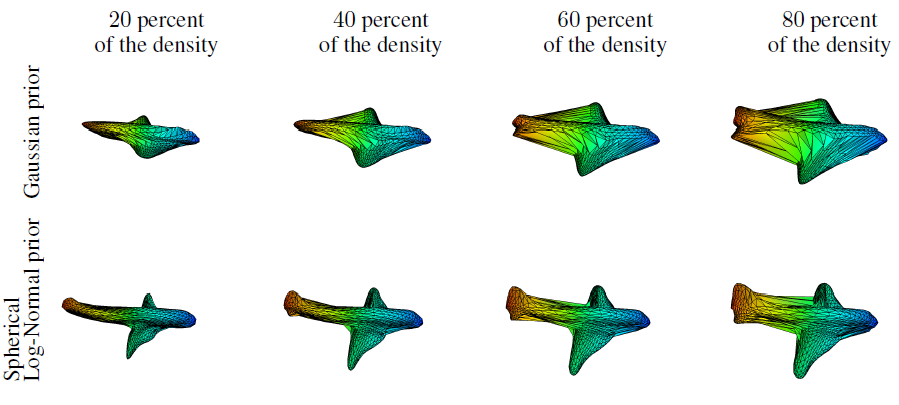} 
\end{center} 
\caption{Object meshes generated for different radii. For the Gaussian prior, the quality and size of the mesh heavily depends on the radius size, while for \ourdens{} the quality and size remains stable across radii sizes.}
\label{mes_diff}
\end{figure}

\begin{figure}
\begin{center}
 \includegraphics[height=8cm]{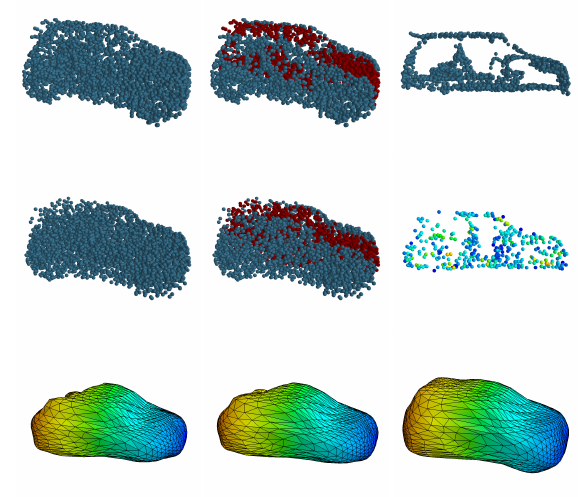}
\end{center} 
\caption{In the first row, we present a car from a data-set which contains elements inside objects. In the second row, we present reconstructions of the object. In the third row, we show meshes generated by radii which contains $40,60$ and $80$ percent of the density. As we can see, radius containing $80$ percent of the density generate the best mesh. }
\label{mes_inside}
\end{figure}

\begin{theorem}\label{th_nor}
Classical Gaussian distribution in $\R^3$ can be approximated by \ourdens{} distribution  (with log normal distribution) with parameters:
$$
\mu =\log(8/\pi)-\frac{1}{2}\log(3) \mbox{ and }
\sigma = \log(3\pi/8).
$$
\end{theorem}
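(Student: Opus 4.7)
The plan is to exploit the spherical symmetry of both distributions and reduce the problem to matching a one--dimensional radial marginal. The standard Gaussian on $\R^{3}$ is isotropic, and the \ourdens{} distribution was constructed to be isotropic as well, so both factor as (radial density) $\times$ (uniform law on $S^{2}$). Hence approximating one by the other reduces to approximating the radial marginal of the Gaussian by a one--dimensional Log--Normal law on $\R_{+}$.

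First, I would derive the radial density of the standard Gaussian on $\R^{3}$. Writing $g(x)=(2\pi)^{-3/2}\exp(-\|x\|^{2}/2)$ in spherical coordinates and integrating out the two angular variables using $\mathrm{vol}(S^{2})=4\pi$ gives
$$
p_{G}(r)=\sqrt{\tfrac{2}{\pi}}\,r^{2}\,e^{-r^{2}/2},\qquad r\geq 0,
$$
the chi distribution with three degrees of freedom (Maxwell--Boltzmann). From here I would read off the first two moments: $E[R^{2}]=E[\|X\|^{2}]=3$ is immediate, and $E[R]=2\sqrt{2/\pi}$ follows from the standard integral $\int_{0}^{\infty} r^{3}e^{-r^{2}/2}\,dr=2$.

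Next, I would use the moments of the Log--Normal with the parameterization in Section~\ref{sec:density}: $E[R]=\exp(\mu+\tfrac{1}{2}\sigma^{2})$ and $E[R^{2}]=\exp(2\mu+2\sigma^{2})$. Matching these against the Maxwell--Boltzmann moments yields the two--by--two log--linear system
\begin{align*}
\mu+\tfrac{1}{2}\sigma^{2} &= \tfrac{3}{2}\log 2-\tfrac{1}{2}\log\pi,\\
2\mu+2\sigma^{2} &= \log 3.
\end{align*}
Subtracting twice the first equation from the second isolates $\sigma^{2}=\log(3\pi/8)$, and substituting back gives $\mu=\log(8/\pi)-\tfrac{1}{2}\log 3$, matching the stated values (reading the theorem's $\sigma$ as the variance $\sigma^{2}$ of $\log R$, since the exponent in the density uses $2\sigma^{2}$).

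I do not anticipate a serious obstacle: the computation is routine bookkeeping of logarithms once the moments are in hand. The only conceptual subtlety is fixing a notion of ``approximates'': with just two free parameters $(\mu,\sigma)$ one cannot hope to match the Gaussian in a stronger sense (total variation, KL) but the two--moment match is canonical because it pins down both the location of the radial mass and its spread, which is exactly what is needed to initialize training from a Gaussian--prior pre--trained model as described in the accompanying text. An optional sanity check would be to plot both radial densities and confirm visually the good agreement already suggested by Fig.~\ref{fig:com_den}.
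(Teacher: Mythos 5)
Your proposal matches the paper's proof in essence: both exploit spherical symmetry to reduce the problem to the radial law, identify the Gaussian's radius as a $\chi_3$ (Maxwell--Boltzmann) variable, and fix $(\mu,\sigma)$ of the Log-Normal by a two-moment match — your use of $E[R]$ and $E[R^2]$ is algebraically equivalent to the paper's matching of mean and variance, and yields the same values (you even carry out the elimination the paper leaves implicit). Your observation that the stated $\sigma=\log(3\pi/8)$ must be read as $\sigma^2$ is correct and consistent with what the paper's own system of equations actually produces.
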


\begin{proof}  Observe that both Gaussian and Spherical Log-Normal distributions are spherical. This means that to compare them it is enough to consider the distributions of the radius. In the case of Gaussian in $\R^3$, the distribution of radius is given by $\chi_3$ distribution, which has mean and variance given by 
$$
\begin{array}{rcl}
\mu_{\chi_3} & = &\sqrt{2} \cdot \Gamma[(3 + 1)/2]/\Gamma[3/2]=
2\sqrt{2/\pi}, \\[1ex] 
\sigma^2_{\chi_3} & = & 3-(\mu_{\chi_3})^2=3-\frac{8}{\pi}.
\end{array}
$$
On the other hand, Log-Normal (LN) distribution with parameters $m$ and $\sigma$ has mean and variance given by 
$$
\begin{array}{rcl}
\mu_{LN} & = &\exp(\mu+\sigma^2/2), \\[1ex]
\sigma^2_{LN} & = &(\exp(\sigma^2)-1)\exp(2(\mu+\sigma^2/2)).
\end{array}
$$

Now we have to solve above system of equations and calculate parameters  $\mu,\sigma$
by  $\mu_{\chi_3}$ i $\sigma_{\chi_3}$.

\end{proof}

\subsection{Families of surfaces}

In this section we would like to describe in a more detailed way, how \our{} approximates objects by families of surfaces.
Let us recall that Fig.~3 of the main paper compares how the prior density is modified for the model with Gaussian prior and \ourdens{}. 
For the Gaussian distribution, its tails are transformed into object details, such as wing tips and airplane rear aileron. Therefore, we cannot claim that the peak density models surfaces of the object, while its tails model the registration noise, as is the case for our \ourdens{} distribution.
For \ourdens{}, the distribution tails are spread along object surfaces, modeling registration noise. This allows us to produce the final mesh through the triangulation trick, effectively denoising 3D mesh-based object representation and yielding high-quality results. 
In \our{} we use triangulation on unit sphere. It is motivated by the fact that point on surfaces has symmetric noise (gaussian noise). Nevertheless, we can use triangulation on sphere with different radii (corresponding to different percent of the density). To compare the models, for both of them we can draw the images of spheres which contain inside the same percentage of the data.  In such a case we obtain families of surfaces.
In Fig.~\ref{mes_diff} we present meshes obtained by different radii which contains $20,40,60$ and $80$ percent of the density. \ourdens{} stabilizes triangulation, while for model with normal prior relatively high fluctuations can be observed.

Usually, it is enough to use triangulation on unit sphere. But in some cases we can obtain better meshes by changing radius of the sphere.
For instance, some elements from ShapeNet do not contain only surfaces of objects. In the case of some cars, we have additional elements like steering wheel, see Fig.~\ref{mes_inside}. In such a case, we can use triangulation trick with a larger radius sphere for obtaining better mesh representation, see Fig.~\ref{mes_inside}.

\subsection{Visualization of mesh representation obtained by \our{}}

Below we present:
\begin{itemize}
\item Fig.~\ref{fig:samp}: Mesh representations generated by our \our{} method - extended version of the meshes presented in the main paper.
\item Fig.~\ref{fig:int_m1} and Fig.~\ref{fig:int_m2}: Visualizations of how the triangulation on the sphere is transformed into a mesh of an object.
\item Fig~\ref{fig:sample}: Visualizations on how the samples from our \ourdens{} prior are transformed into points on objects.
\end{itemize}

Overall, our \our{} method offers stable and high-quality object meshes at significantly lower computation cost than the competing point cloud generative models.

\begin{figure}[t!]
\begin{center}
\includegraphics[height=19cm]{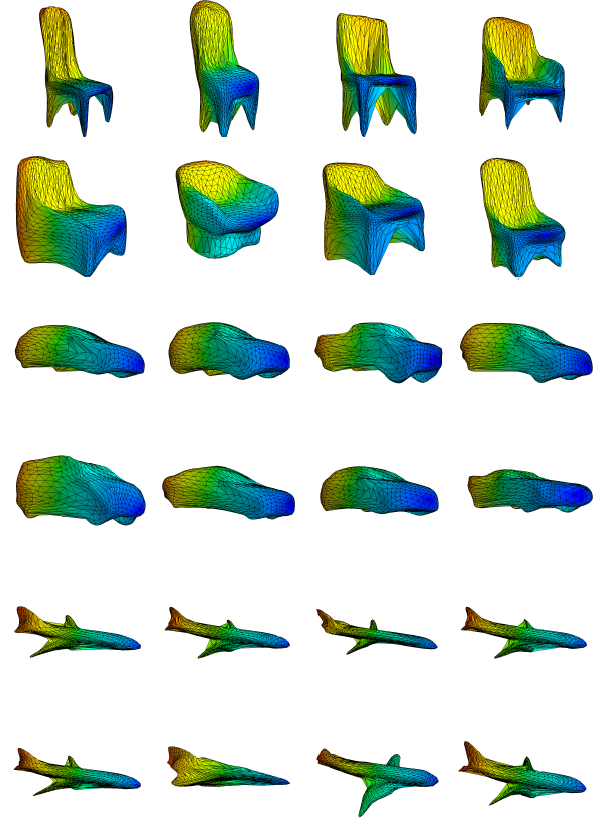}
\end{center} 
\caption{Mesh representations generated by our \our{} method. Contrary to the existing methods that return point cloud representations sparsely distributed in 3D space, our approach allows to create a continuous 3D object representation in the form of high-quality meshes.}
\label{fig:samp}
\vspace{2cm}
\end{figure}

%

\begin{figure}[t!]
\begin{center}
\includegraphics[height=20cm]{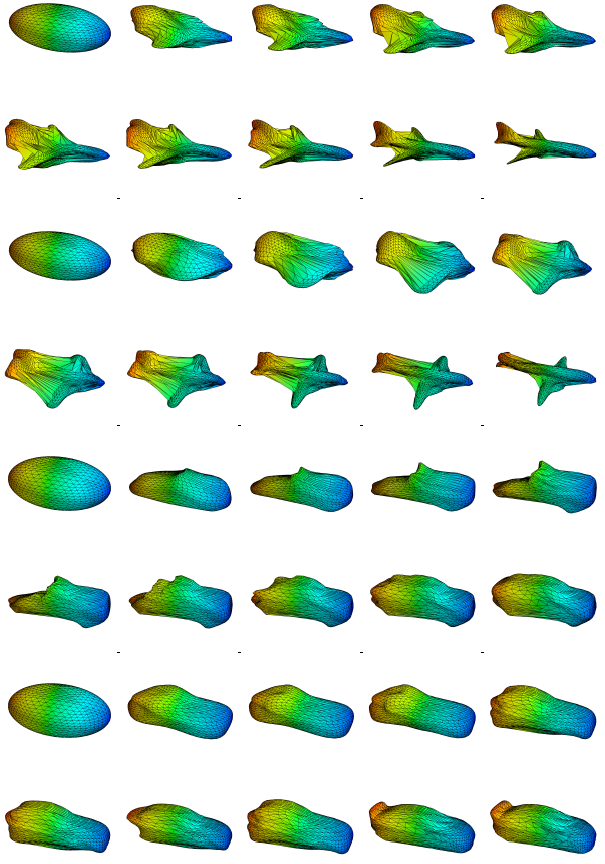}
\end{center} 
\caption{ In the image we present how the triangulation on the sphere is transformed into mesh of object. As we can see, thanks to triangulation trick we obtain high quality mesh. Thanks to us CNF as a target network we can visualize continuous transformation between uniform sphere and surfaces of objects.}
\label{fig:int_m1}
\end{figure}

\begin{figure}[t!]
\begin{center}

\includegraphics[height=20cm]{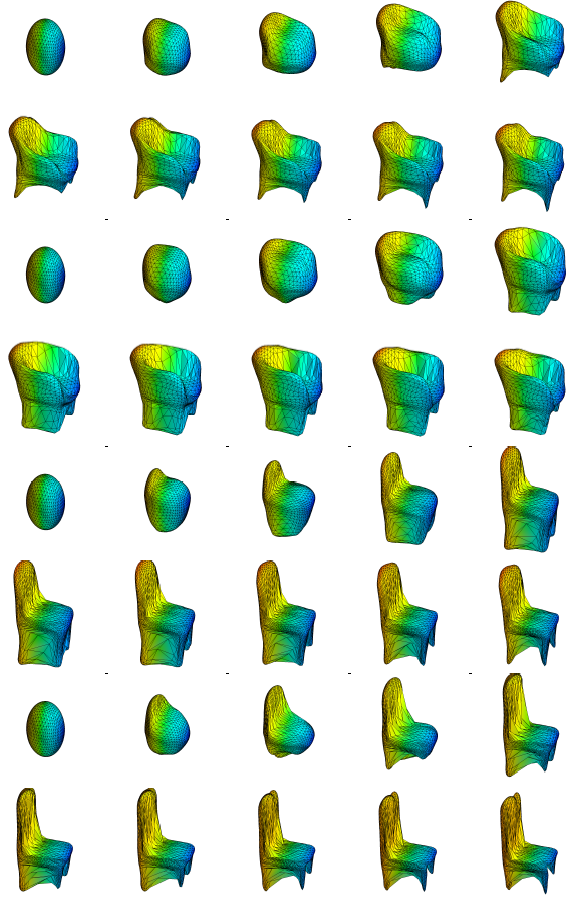}
\end{center} 
\caption{We show how the triangulation on the sphere is transformed into a mesh of object. Thanks to the so-called triangulation trick, we obtain high quality object meshes. Since we use a CNF as a target network, we can visualize a continuous transformation between a uniform sphere and surfaces of objects.}
\label{fig:int_m2}
\end{figure}

\begin{figure}[t!]
\begin{center}
\includegraphics[height=20cm]{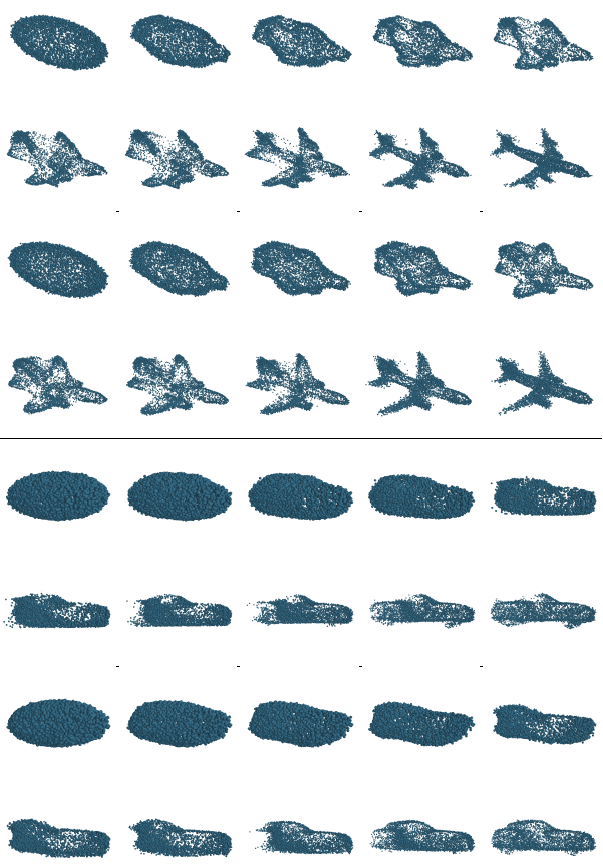}
\end{center} 
\caption{We visualize how the samples from prior (\ourdens{}) are transformed into points on object surfaces. The transformation produces a point representation of a similar quality to PiontFlow.}
\label{fig:sample}
\end{figure}

\bibliographystyle{abbrvnat}

\end{document}